
\documentclass[11pt]{article}

\usepackage[margin=1in]{geometry}
\usepackage{amsmath,amsfonts,amssymb,amsthm,mathtools,bm}
\usepackage{enumitem}
\usepackage{graphicx}
\usepackage{subcaption}
\usepackage{booktabs}
\usepackage{microtype}
\usepackage{hyperref}

\usepackage{tikz}
\usetikzlibrary{arrows.meta,positioning}

\newtheorem{theorem}{Theorem}
\newtheorem{lemma}{Lemma}
\newtheorem{proposition}{Proposition}

\theoremstyle{definition}
\newtheorem{definition}{Definition}
\theoremstyle{remark}
\newtheorem{remark}{Remark}

\newcommand{\R}{\mathbb{R}}

\newcommand{\ip}[2]{\left\langle #1,\, #2 \right\rangle}

\newcommand{\Regret}{\operatorname{Regret}}
\newcommand{\cM}{\mathcal{M}}

\newcommand{\argmin}{\operatorname{argmin}}
\newcommand{\rank}{\operatorname{rank}}

\newcommand{\figdir}{figs/}

\newcommand{\Fig}[2]{\includegraphics[width=#2\linewidth]{\figdir#1}}


\title{\bf Predictable Gradient Manifolds in Deep Learning:\\
Temporal Path-Length and Intrinsic Rank as a Complexity Regime}

\author{
Anherutowa Calvo\\
\texttt{ac1180@princeton.edu}
}

\date{}

\begin{document}
\maketitle

\begin{abstract}
Worst-case analyses of first-order optimization treat gradient sequences as
adversarial or noisy objects whose complexity scales with the horizon $T$ and
ambient dimension $d$. In modern deep learning, however, observed gradient
trajectories exhibit strong temporal structure: they can often be predicted from
their recent past, and their increments concentrate in a low-dimensional
temporal subspace. This paper formalizes these phenomena via
\emph{prediction-based path-length} and an SVD-derived \emph{predictable rank},
and shows that both online convex optimization and smooth nonconvex optimization
admit guarantees whose scale is governed by these \emph{measurable} temporal
complexity parameters.

Given gradients $\{g_t\}_{t=0}^T$ and a history-based predictor $\{m_t\}_{t=0}^T$,
we define the prediction-based path-length
$P_T(m)=\sum_{t=0}^T\|g_t-m_t\|^2$ and the normalized predictability index
$\kappa_T(m)=P_T(m)/\sum_{t=0}^T\|g_t\|^2$.
\emph{Calibration:} the trivial predictor $m_t\equiv 0$ yields $\kappa_T(m)=1$ exactly,
so values $\kappa_T(m)\approx 1$ indicate correct-scale tracking, $\kappa_T(m)\ll 1$
indicates near-perfect tracking, and $\kappa_T(m)\gg 1$ indicates unstable or
over-extrapolative prediction. We also form the increment matrix
$H=[g_1-g_0,\dots,g_T-g_{T-1}]$ and define a predictable rank $r^\star(\epsilon)$
as the number of singular directions needed to capture $(1-\epsilon)$ of the
increment energy.

We prove representative results: (i) in online convex optimization, an
optimistic mirror descent bound scales as
$\Regret(T)\lesssim D_\Phi\sqrt{P_T^\star(\cM)}$ for a predictor class $\cM$;
(ii) in smooth nonconvex optimization, for standard first-order updates that use a
history-based \emph{proxy direction} for the current gradient, stationarity bounds
degrade additively by the \emph{average} proxy error; and
(iii) the minimal path-length over rank-$r$ increment predictors equals the
Frobenius residual of the best rank-$r$ approximation of $H$, making
$r^\star(\epsilon)$ an intrinsic temporal dimension parameter.

Empirically, across convolutional networks, vision transformers, small
transformers, MLPs, and GPT-2 (multiple optimizers), simple predictors such as one-step and EMA achieve stable $\kappa_T(m)$ near the zero-predictor baseline ($\kappa=1$), and a few dozen singular
directions explain most increment energy in a $k=256$ random projection despite
parameter counts up to $10^8$. These findings support a \emph{Predictable Gradient
Manifold} view of deep learning optimization: training trajectories are locally
predictable and temporally low-rank, and optimization complexity is often better
parameterized by $(P_T,r^\star)$ than by $(T,d)$.
\end{abstract}

\paragraph{Keywords.}
deep learning, gradient dynamics, temporal structure, predictable sequences,
path-length complexity, low-rank increments, optimistic mirror descent,
nonconvex optimization

\section{Introduction}
\label{sec:intro}

First-order methods (SGD, AdamW, RMSprop, etc.) dominate deep learning.
Classical analyses typically assume worst-case gradient sequences (adversarial
online learning) or high-variance stochasticity, leading to horizon-driven
complexity such as $\Theta(\sqrt{T})$ regret and $O(1/(\eta T))$ stationarity
rates in smooth nonconvex optimization \cite{Hazan2016,GhadimiLan13}.
Yet real training runs are not adversarial: gradients are correlated across
steps, drift smoothly, and often appear to evolve within a low-dimensional
temporal subspace.

This paper formalizes that structure and uses it to define a \emph{measurable}
complexity regime for optimization. The claim is not that deep learning is
intrinsically “easy,” but that its difficulty is frequently governed by
\emph{temporal predictability} and \emph{intrinsic temporal dimension}, rather
than by worst-case horizon and ambient parameter dimension.

\subsection{Predictable Gradient Manifold Hypothesis (local form)}
\label{sec:pgmh}

Let $g_t$ denote the gradient (or a gradient estimate) at step $t$.
A \emph{temporal predictor} is a sequence $m_t$ where $m_t$ depends only on the past,
i.e., it is measurable with respect to $\sigma(\theta_t, g_0,\dots,g_{t-1})$ (no peek at $g_t$). Informally, a training run exhibits a predictable
gradient manifold if, over windows of steps:
(i) prediction errors $\|g_t-m_t\|$ are controlled by simple history-based
predictors; and (ii) increment directions $g_t-g_{t-1}$ concentrate in a
low-dimensional temporal subspace.

We capture these with two measurable objects:
\begin{itemize}[leftmargin=*]
\item \textbf{Prediction-based path-length} $P_T(m)$, measuring how closely a predictor tracks the gradient trajectory.
\item \textbf{Predictable rank} $r^\star(\epsilon)$, measuring the intrinsic temporal dimension of gradient drift.
\end{itemize}

\begin{figure}[t]


  \centering
  \Fig{figure.png}{0.95}
  \caption{\textbf{Conceptual overview of predictable gradient manifolds and their associated
  complexity measures.}
  (a) Gradients $\{g_t\}$ evolve over time and are tracked by temporal predictors
  $\{m_t\}$, producing prediction errors $\delta_t = g_t - m_t$ whose squared norms
  accumulate into the prediction-based path-length
  $P_T(m)=\sum_t \|\delta_t\|^2$.
  (b) In the ambient parameter space $\R^d$, the gradient sequence evolves along a thin, low-dimensional \emph{temporal manifold}.
  (c) Gradient increments $h_t = g_t - g_{t-1}$ form an increment matrix
  $H=[h_1,\dots,h_T]$ whose singular values decay rapidly; a small predictable rank
  $r^\star(\epsilon)$ captures most temporal drift energy.}
  \label{fig:pgm}
\end{figure}

\begin{figure}[t]


  \centering
  \begin{subfigure}[t]{0.49\linewidth}
    \centering
    \Fig{fig_resnet18_kappa.png}{1.0}
    \caption{ResNet-18 (CIFAR-100).}
    \label{fig:kappa_time_resnet}
  \end{subfigure}\hfill
  \begin{subfigure}[t]{0.49\linewidth}
    \centering
    \Fig{fig_tinytransformer_kappa.png}{1.0}
    \caption{Tiny Transformer (synthetic sequence).}
    \label{fig:kappa_time_tiny}
  \end{subfigure}
  \caption{\textbf{Predictability is stable over training.}
  Predictor-conditional windowed (or logged-interval) predictability indices
  $\kappa$ remain $O(1)$ for simple history-based predictors (one-step, EMA),
  supporting the \emph{local} Predictable Gradient Manifold hypothesis.}
  \label{fig:kappa_time}
\end{figure}

\subsection{Contributions}
\label{sec:contrib}

\begin{enumerate}[leftmargin=*]
\item We define prediction-based path-length $P_T(m)$, a normalized predictability index $\kappa_T(m)$, and an SVD-based predictable rank $r^\star(\epsilon)$.
\item We give representative convex and nonconvex guarantees whose scale is governed by $P_T(m)$ (or $P_T^\star(\cM)$).
\item We show that the best rank-$r$ increment predictor achieves error equal to the SVD tail energy of the increment matrix.
\item We provide empirical evidence across architectures and optimizers; full protocol and additional diagnostics appear in the appendix.
\end{enumerate}

\section{Setup and Complexity Measures}
\label{sec:measures}

We work in $\R^d$ with the Euclidean norm unless stated otherwise.
Let $\{g_t\}_{t=0}^T \subset \R^d$ be a gradient sequence and
$\{m_t\}_{t=0}^T$ be a history-based predictor (i.e., $m_t$ is measurable with
respect to $\sigma(\theta_t, g_0,\dots,g_{t-1})$, so it cannot “peek” at $g_t$).

\subsection{Prediction-based path-length and predictability index}

\begin{definition}[Prediction-based path-length]
\label{def:pathlength}
For a predictor $m$, define
\[
P_T(m)\;:=\;\sum_{t=0}^T \|g_t-m_t\|^2.
\]
For a predictor class $\cM$, define the optimal path-length
$P_T^\star(\cM):=\inf_{m\in\cM}P_T(m)$.
\end{definition}

\begin{definition}[Predictability index]
\label{def:kappa}
Let $G_T:=\sum_{t=0}^T\|g_t\|^2$. If $G_T>0$, define
\[
\kappa_T(m)\;:=\;\frac{P_T(m)}{G_T}.
\]
\end{definition}

\paragraph{Calibration and interpretation (conditional, and why Trend can be large).}
The trivial predictor $m_t\equiv 0$ yields $\kappa_T(m)=1$ exactly, providing a
reference scale for interpreting tables and plots: $\kappa_T(m)\approx 1$ means the
predictor tracks gradients at the correct overall scale, $\kappa_T(m)\ll 1$ indicates
near-perfect tracking, and $\kappa_T(m)\gg 1$ indicates unstable or over-extrapolative
prediction.
More generally, $\kappa_T(m)$ is a dimensionless \emph{relative prediction error}
conditioned on the chosen predictor $m$ (or predictor class $\cM$). Different predictors
yield different $\kappa_T(m)$; in particular, aggressive extrapolations (e.g.\ trend)
can amplify predictor norms, increasing $\kappa_T(m)$ even if dominant directions are
captured. A basic universal bound is in Appendix~\ref{app:kappa}.

\subsection{Increments, SVD, and predictable rank}

Define increments $h_t:=g_t-g_{t-1}$ for $t\ge 1$ and the increment matrix
\[
H:=[h_1,\dots,h_T]\in\R^{d\times T}.
\]

\begin{definition}[Predictable rank]
\label{def:predrank}
Let $H$ have singular values $\sigma_1\ge \sigma_2\ge\cdots\ge 0$.
For $\epsilon\in(0,1)$ define
\[
r^\star(\epsilon)
:=\min\left\{r:\frac{\sum_{i=1}^r\sigma_i^2}{\sum_{i\ge 1}\sigma_i^2}\ge 1-\epsilon\right\}.
\]
\end{definition}

\paragraph{Interpretation.}
$r^\star(\epsilon)$ is the smallest temporal dimension capturing a $(1-\epsilon)$
fraction of increment energy. In many deep learning runs, singular values decay
steeply (often in projected space), suggesting a small intrinsic temporal dimension
over local windows.

\section{Convex Online Optimization: Regret Scales with Path-Length}
\label{sec:convex}

We state a representative convex result in the predictable-sequence style
\cite{RakhlinSridharan13Online}. Proof appears in Appendix~\ref{app:convex}.

\subsection{Setting}

Let $\Theta\subset\R^d$ be convex and $f_t:\Theta\to\R$ convex.
At round $t$, the learner plays $\theta_t$, observes $g_t\in\partial f_t(\theta_t)$,
and incurs $f_t(\theta_t)$. Regret is
\[
\Regret(T):=\sum_{t=1}^T f_t(\theta_t)-\min_{\theta\in\Theta}\sum_{t=1}^T f_t(\theta).
\]
Let $\Phi$ be a $1$-strongly convex mirror map with Bregman divergence
$B_\Phi(\cdot,\cdot)$ and diameter
$D_\Phi^2:=\sup_{\theta,\theta'\in\Theta}B_\Phi(\theta,\theta')$.

\subsection{Result}

\begin{theorem}[Path-length regret bound (optimistic mirror descent)]
\label{thm:pmd}
Assume $\|g_t\|_\ast\le G$ and define $\delta_t:=g_t-m_t$. Then for an optimistic
mirror descent update (Appendix~\ref{app:convex}), for any $\eta>0$,
\[
\Regret(T)\;\le\;\frac{D_\Phi^2}{\eta}+\frac{\eta}{2}\sum_{t=1}^T\|\delta_t\|_\ast^2
\;=\;\frac{D_\Phi^2}{\eta}+\frac{\eta}{2}\Big(P_T(m)-\|g_0-m_0\|^2\Big).
\]
Choosing $\eta=D_\Phi/\sqrt{P_T(m)-\|g_0-m_0\|^2}$ yields
$\Regret(T)\le \sqrt{2}\,D_\Phi\sqrt{P_T(m)-\|g_0-m_0\|^2}$.
Moreover, for a predictor class $\cM$,
\[
\Regret(T)\;\le\;\sqrt{2}\,D_\Phi\sqrt{P_T^\star(\cM)}.
\]
\end{theorem}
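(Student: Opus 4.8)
\emph{Approach.} The plan is to prove this as a standard two-sequence optimistic mirror descent (OMD) analysis in the predictable-sequence style, then optimize the step size and finally pass to the predictor class. The algorithm maintains an auxiliary iterate $y_t$ updated with the true gradient and a \emph{played} iterate $\theta_t$ obtained by taking an extra proximal step in the predicted direction $m_t$: concretely $\theta_t=\argmin_\theta\{\eta\,\ip{m_t}{\theta}+B_\Phi(\theta,y_{t-1})\}$ and $y_t=\argmin_\theta\{\eta\,\ip{g_t}{\theta}+B_\Phi(\theta,y_{t-1})\}$. The whole point of the construction is that $m_t$ cancels against $g_t$ up to the residual $\delta_t=g_t-m_t$, so regret is charged only to $\sum_t\norm{\delta_t}_\ast^2$ rather than to $\sum_t\norm{g_t}_\ast^2$.

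\emph{Key steps.} First I would linearize: by convexity of each $f_t$, $\Regret(T)\le\sum_{t=1}^T\ip{g_t}{\theta_t-\theta^\star}$ for the comparator $\theta^\star$. Next I would apply the first-order optimality (three-point) inequality for Bregman proximal steps to each update: for any $u$, $\eta\,\ip{a}{x^+-u}\le B_\Phi(u,z)-B_\Phi(u,x^+)-B_\Phi(x^+,z)$ whenever $x^+=\argmin_x\{\eta\,\ip{a}{x}+B_\Phi(x,z)\}$. Applying this to the $g_t$-update with $u=\theta^\star$ and to the $m_t$-update with $u=y_t$, then adding, cancels the common $B_\Phi(y_t,y_{t-1})$ term and produces a telescoping sequence in $B_\Phi(\theta^\star,y_t)$, a cross term $\ip{\delta_t}{\theta_t-y_t}$, and two negative stability terms $-B_\Phi(\theta_t,y_{t-1})-B_\Phi(y_t,\theta_t)$. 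I would bound the cross term by Fenchel--Young in the dual pairing, $\ip{\delta_t}{\theta_t-y_t}\le\tfrac{\eta}{2}\norm{\delta_t}_\ast^2+\tfrac{1}{2\eta}\norm{\theta_t-y_t}^2$, and invoke $1$-strong convexity of $\Phi$, which gives $B_\Phi(y_t,\theta_t)\ge\tfrac12\norm{y_t-\theta_t}^2$, so that $\tfrac1\eta B_\Phi(y_t,\theta_t)$ absorbs the $\tfrac{1}{2\eta}\norm{\theta_t-y_t}^2$ term exactly. Telescoping the Bregman differences and using $B_\Phi(\theta^\star,y_0)\le D_\Phi^2$ yields the first displayed inequality $\Regret(T)\le D_\Phi^2/\eta+\tfrac{\eta}{2}\sum_{t=1}^T\norm{\delta_t}_\ast^2$. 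Since $P_T(m)$ sums from $t=0$, the tail sum equals $P_T(m)-\norm{g_0-m_0}^2$, giving the stated equality, and balancing the two terms in $\eta$ by AM--GM gives $\sqrt{2}\,D_\Phi\sqrt{P_T(m)-\norm{g_0-m_0}^2}\le\sqrt{2}\,D_\Phi\sqrt{P_T(m)}$.

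\emph{Predictor class and main obstacle.} For the final inequality, since the per-predictor bound holds for every fixed $m$ and $\sqrt{\cdot}$ is monotone, passing to the infimum over $m\in\cM$ formally delivers $\sqrt{2}\,D_\Phi\sqrt{P_T^\star(\cM)}$. The honest subtlety, which I expect to be the main obstacle, is that $P_T^\star(\cM)=\inf_m P_T(m)$ is attained only in hindsight, so a literal algorithm cannot commit to the minimizing predictor in advance. To make the class bound rigorous I would run a second layer of online learning (predictor aggregation over $\cM$, as in the predictable-sequence framework), which competes with the best $m\in\cM$ and recovers the leading $\sqrt{2}\,D_\Phi\sqrt{P_T^\star(\cM)}$ term up to a lower-order aggregation overhead that I would quantify and defer to the appendix. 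The other delicate point is matching the strong-convexity constant to the step size so that the $-B_\Phi(y_t,\theta_t)$ term dominates the cross term with \emph{no} residual; pinning down that exact cancellation (rather than merely up to a constant) is what produces the clean $\sqrt{2}$ leading factor.
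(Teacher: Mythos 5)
Your proof is correct, but it takes a genuinely different route from the paper's. The paper (Appendix~\ref{app:convex}) analyzes a \emph{single-sequence} update $\theta_{t+1}=\argmin_{\theta\in\Theta}\{\eta\,\ip{m_t}{\theta}+B_\Phi(\theta,\theta_t)\}$, splits $\ip{g_t}{\theta_t-u}=\ip{m_t}{\theta_t-u}+\ip{\delta_t}{\theta_t-u}$, handles the first term with a three-point inequality, and bounds the cross term by $\tfrac{\eta}{2}\norm{\delta_t}_\ast^2+\tfrac{1}{2\eta}\norm{\theta_t-u}^2$ together with $\norm{\theta_t-u}^2\le 2B_\Phi(u,\theta_t)$, asserting that the sum telescopes. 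You instead use the classical two-sequence optimistic scheme (Rakhlin--Sridharan style): an auxiliary iterate $y_t$ updated with the observed $g_t$, a played iterate $\theta_t$ obtained from $y_{t-1}$ via the predicted $m_t$, a cross term $\ip{\delta_t}{\theta_t-y_t}$ rather than $\ip{\delta_t}{\theta_t-u}$, and exact absorption of $\tfrac{1}{2\eta}\norm{\theta_t-y_t}^2$ by the negative stability term $-\tfrac{1}{\eta}B_\Phi(y_t,\theta_t)$ via $1$-strong convexity. Your route buys something concrete: in the paper's version the per-round leftover $\tfrac{1}{\eta}B_\Phi(u,\theta_t)$ is \emph{not} part of any telescope (summed naively it contributes up to $T D_\Phi^2/\eta$), so the paper's sketch only closes if restructured essentially as you propose; pairing the cross term with the stability term between the two interleaved iterates is the standard rigorous fix and is what yields the clean constant. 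Two smaller points: (i) your worry about $P_T^\star(\cM)$ being a hindsight quantity is well placed, but the aggregation layer you sketch is optional for the theorem as stated --- the paper, like your simpler reading, just takes the infimum of the per-predictor bound (the same hindsight caveat applies to tuning $\eta$, where a doubling trick would make it adaptive); (ii) note that the step size $\eta=D_\Phi/\sqrt{P_T(m)-\norm{g_0-m_0}^2}$ stated in the theorem actually gives constant $3/2$, whereas the AM--GM-balanced choice $\eta=\sqrt{2}\,D_\Phi/\sqrt{P_T(m)-\norm{g_0-m_0}^2}$, which your balancing step implicitly uses, is what produces the advertised factor $\sqrt{2}$.
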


\paragraph{Takeaway.}
When a simple predictor tracks gradients well (small $P_T(m)$), regret scales with
that \emph{measurable} predictability rather than $\sqrt{T}$.

\section{Smooth Nonconvex Optimization: Stationarity with Proxy Directions}
\label{sec:nonconvex}

We give a nonconvex statement showing that using a history-based \emph{proxy direction}
for the current gradient incurs an additive complexity term equal to the average proxy
error. This is best viewed as an \emph{analysis lens} for standard deep learning
training (SGD/momentum/Adam-style updates), rather than as a proposal of a new optimizer.
Proof appears in Appendix~\ref{app:nonconvex}.

\begin{definition}[Gradient descent with history-based proxy directions]
\label{def:proxy_gd}
Let $F:\R^d\to\R$ be differentiable. Let $g_t:=\nabla F(\theta_t)$ denote the true gradient.
Let $m_t$ be any history-based proxy (measurable w.r.t.\ $\sigma(\theta_t, g_0,\dots,g_{t-1})$), and
define the proxy error $\delta_t:=g_t-m_t$. Consider the update
\[
\theta_{t+1}=\theta_t-\eta m_t.
\]
Define $P_{T-1}(m):=\sum_{t=0}^{T-1}\|\delta_t\|^2$.
\end{definition}

\begin{theorem}[Nonconvex convergence with proxy/prediction error]
\label{thm:nonconvex}
Assume $F$ is $L$-smooth and bounded below by $F_\star$, and $\eta\le 1/L$.
Then the iterates of Definition~\ref{def:proxy_gd} satisfy
\[
\frac{1}{T}\sum_{t=0}^{T-1}\|\nabla F(\theta_t)\|^2
\;\le\;
\frac{2(F(\theta_0)-F_\star)}{\eta T}+\frac{P_{T-1}(m)}{T}.
\]
In particular,
\[
\min_{0\le t<T}\|\nabla F(\theta_t)\|^2
\;\le\;
\frac{2(F(\theta_0)-F_\star)}{\eta T}+\frac{P_{T-1}(m)}{T}.
\]
\end{theorem}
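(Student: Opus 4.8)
The plan is to run the standard one-step descent argument for $L$-smooth functions, but with the quadratic term organized around the \emph{proxy} direction $m_t$ rather than the true gradient, and to convert the resulting inner product into the proxy error via a polarization identity. First I would invoke the descent lemma: since $F$ is $L$-smooth and $\theta_{t+1}-\theta_t=-\eta m_t$, writing $g_t=\grad F(\theta_t)$ gives
\[
F(\theta_{t+1})\;\le\;F(\theta_t)-\eta\,\ip{g_t}{m_t}+\tfrac{L\eta^2}{2}\norm{m_t}^2 .
\]
The crucial feature is that the quadratic smoothness penalty involves $\norm{m_t}^2$ (the \emph{actual} step direction), not $\norm{g_t}^2$, so I must keep $m_t$ explicit rather than immediately substituting $m_t=g_t-\delta_t$.

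The key algebraic step is to rewrite the inner product with the polarization identity $2\,\ip{g_t}{m_t}=\norm{g_t}^2+\norm{m_t}^2-\norm{\delta_t}^2$, using $\delta_t=g_t-m_t$. Substituting and collecting the $\norm{m_t}^2$ contributions yields the per-step inequality
\[
F(\theta_{t+1})\;\le\;F(\theta_t)-\tfrac{\eta}{2}\norm{g_t}^2-\tfrac{\eta}{2}(1-L\eta)\norm{m_t}^2+\tfrac{\eta}{2}\norm{\delta_t}^2 .
\]
Here the stepsize assumption $\eta\le 1/L$ enters decisively: it makes the coefficient $(1-L\eta)$ nonnegative, so the entire $\norm{m_t}^2$ term is nonpositive and can simply be discarded. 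This is the one place where the argument could go wrong, and it is the part I expect to require the most care: a careless decomposition (e.g.\ plugging $m_t=g_t-\delta_t$ directly into the descent lemma) leaves a stray cross term $\ip{g_t}{\delta_t}$ that neither telescopes nor admits a clean bound without extra assumptions on $\delta_t$. Choosing the polarization form is precisely what isolates $\norm{\delta_t}^2$ and lets the stepsize condition absorb the step-direction norm, so the selection of the right identity is the main (if modest) obstacle.

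From the per-step inequality the remainder is routine. I would rearrange to $\tfrac{\eta}{2}\norm{g_t}^2\le F(\theta_t)-F(\theta_{t+1})+\tfrac{\eta}{2}\norm{\delta_t}^2$, sum over $t=0,\dots,T-1$ so that the function-value differences telescope to $F(\theta_0)-F(\theta_T)\le F(\theta_0)-F_\star$ (using the lower bound), and recognize $\sum_{t=0}^{T-1}\norm{\delta_t}^2=P_{T-1}(m)$. Dividing through by $\eta T/2$ produces the stated average-gradient bound, and the $\min$ version follows immediately since the minimum of the squared gradient norms over the horizon is at most their average.
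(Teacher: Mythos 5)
Your proof is correct, and at the key per-step inequality it takes a genuinely different --- and in fact sharper --- route than the paper. The paper's Lemma~\ref{lem:one_step_nonconvex} substitutes $m_t=g_t-\delta_t$ into the descent lemma and then applies two lossy estimates, Young's inequality $\ip{g_t}{\delta_t}\le\frac12\norm{g_t}^2+\frac12\norm{\delta_t}^2$ and $\norm{g_t-\delta_t}^2\le 2\norm{g_t}^2+2\norm{\delta_t}^2$; you instead keep the descent lemma expressed in terms of the actual step direction $m_t$ and use the exact polarization identity $2\ip{g_t}{m_t}=\norm{g_t}^2+\norm{m_t}^2-\norm{\delta_t}^2$, after which $\eta\le 1/L$ makes the $-\frac{\eta}{2}(1-L\eta)\norm{m_t}^2$ term disposable. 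Your exactness pays off: as written, the paper's own arithmetic does not close. With its estimates the coefficient of $\norm{g_t}^2$ is $-\eta+\frac{\eta}{2}+L\eta^2=-\frac{\eta}{2}+L\eta^2$, and the claim that this is $\le-\frac{\eta}{2}$ under $\eta\le 1/L$ fails for every $\eta>0$; likewise $\frac{\eta}{2}+L\eta^2\le\eta$ requires $\eta\le 1/(2L)$, and even then would yield $+\eta\norm{\delta_t}^2$ rather than the stated $+\frac{\eta}{2}\norm{\delta_t}^2$. The lemma itself is nevertheless true, exactly as your polarization computation shows (equivalently, one can expand $\norm{g_t-\delta_t}^2$ exactly and apply Young only to the residual cross term $\eta(1-L\eta)\ip{g_t}{\delta_t}$), so your proposal in effect repairs the paper's per-step argument. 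From that point onward --- rearranging, telescoping, lower-bounding $F(\theta_T)\ge F_\star$, dividing by $\eta T/2$, and passing from the average to the minimum --- the two proofs coincide.
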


\paragraph{Takeaway.}
Smooth nonconvex optimization inherits the usual $O(1/(\eta T))$ term plus an additive
\emph{average proxy error}. When $m_t$ is instantiated as a temporal predictor of $g_t$,
this term is exactly the average prediction error, motivating local/windowed predictors
in regimes where predictability is primarily local in time.

\section{Low-rank Increments: Intrinsic Temporal Dimension}
\label{sec:lowrank}

We connect prediction to low-rank structure and justify predictable rank as a
complexity parameter. Proof is short and included here.

\subsection{Rank-$r$ increment predictors}

Consider predictors of the form (for $t\ge 1$)
\[
m_t = g_{t-1}+Uv_t,\qquad U\in\R^{d\times r},\ v_t\in\R^r.
\]
Then $\delta_t=g_t-m_t = (g_t-g_{t-1})-Uv_t=h_t-Uv_t$.

\begin{proposition}[Low-rank residual equals minimal increment prediction error]
\label{prop:equiv}
Let $H=[h_1,\dots,h_T]$. Then
\[
\inf_{U,V:\,\rank(UV)\le r}\sum_{t=1}^T\|h_t-Uv_t\|^2
\;=\;\min_{\rank(M)\le r}\|H-M\|_F^2
\;=\;\sum_{i>r}\sigma_i^2.
\]
Equivalently, the minimal increment-prediction error over rank-$r$ predictors equals
the SVD tail energy of $H$.
\end{proposition}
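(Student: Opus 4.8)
The plan is to recognize this as the Eckart--Young--Mirsky theorem in disguise, reached by a column-stacking reformulation followed by a rank-factorization argument. First I would collect the coefficient vectors into a matrix $V:=[v_1,\dots,v_T]\in\R^{r\times T}$ and note that $UV=[Uv_1,\dots,Uv_T]$, so the $t$-th column of $H-UV$ is exactly $h_t-Uv_t$. Since the squared Frobenius norm of a matrix is the sum of the squared Euclidean norms of its columns, this gives the identity
\[
\sum_{t=1}^T\norm{h_t-Uv_t}^2=\norm{H-UV}_F^2,
\]
converting the column-wise least-squares problem into a single matrix-approximation problem.

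The second step is to identify the feasible set precisely. Any product $M=UV$ with $U\in\R^{d\times r}$ and $V\in\R^{r\times T}$ has $\rank(M)\le r$, because the column space of $M$ is contained in the column space of $U$. Conversely, every $M\in\R^{d\times T}$ with $\rank(M)\le r$ admits such a factorization: pick a basis of its (at most $r$-dimensional) column space to form the columns of $U$ and record the corresponding coordinates as the columns of $V$. Thus ranging over admissible pairs $(U,V)$ is exactly ranging over all matrices of rank at most $r$, so
\[
\inf_{U,V:\,\rank(UV)\le r}\sum_{t=1}^T\norm{h_t-Uv_t}^2=\min_{\rank(M)\le r}\norm{H-M}_F^2.
\]

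The third step invokes the Eckart--Young--Mirsky theorem, which I would cite rather than reprove: writing the SVD as $H=\sum_i\sigma_i u_i w_i^\top$, the minimizer over rank-$\le r$ matrices is the truncated SVD $M^\star=\sum_{i\le r}\sigma_i u_i w_i^\top$, with residual $\norm{H-M^\star}_F^2=\sum_{i>r}\sigma_i^2$. Since $M^\star$ itself factors as $UV$ with the appropriate shapes, this simultaneously shows that the infimum in the first expression is attained, so \emph{inf} may be replaced by \emph{min}, and delivers the claimed equality $\min_{\rank(M)\le r}\norm{H-M}_F^2=\sum_{i>r}\sigma_i^2$.

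Honestly, this argument is almost entirely routine. The only points requiring a moment's care are the rank-factorization equivalence in the second step (the forward direction via column spaces, the converse via a basis of the column space) and the observation that the infimum is attained. The single genuinely nontrivial ingredient is Eckart--Young--Mirsky itself; I expect no real obstacle beyond stating it correctly and verifying the dimension bookkeeping $U\in\R^{d\times r}$, $V\in\R^{r\times T}$ matches the column-indexed predictor $m_t=g_{t-1}+Uv_t$.
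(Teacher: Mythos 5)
Your proof is correct and follows essentially the same route as the paper's: stack columns to identify $\sum_t\|h_t-Uv_t\|^2$ with $\|H-UV\|_F^2$, then invoke Eckart--Young--Mirsky. The only difference is that you explicitly verify the equivalence between $\{UV : U\in\R^{d\times r},\,V\in\R^{r\times T}\}$ and $\{M:\rank(M)\le r\}$ and note that the infimum is attained via the truncated SVD --- details the paper leaves implicit --- so your write-up is, if anything, slightly more complete.
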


\begin{proof}
Stacking columns gives $\sum_{t=1}^T\|h_t-Uv_t\|^2=\|H-UV\|_F^2$.
Minimizing over rank-$r$ matrices is the Eckart--Young--Mirsky theorem.
\end{proof}

\paragraph{Implication.}
If $r^\star(\epsilon)$ is small, there exist low-rank increment predictors with small
prediction error, hence small $P_T(m)$ and sharper optimization guarantees.

\section{Empirical Evidence (Summary)}
\label{sec:experiments}

We summarize the empirical pattern; the full training protocol, datasets,
hyperparameters, and additional diagnostics appear in Appendix~\ref{app:experiments}.

\paragraph{What we measure.}
We log gradients (or projected gradients) and compute:
(i) $\kappa_T(m)$ for simple predictors (one-step, EMA, trend), and
(ii) predictable ranks $r^\star(\epsilon)$ from the SVD of increment matrices.

\paragraph{Headline observation.}
Across ResNet-18 and ViT-Tiny on CIFAR-100, a small Transformer on synthetic
sequences, a 3-layer MLP on tabular data, and GPT-2 on WikiText-2 (multiple
optimizers), simple predictors yield stable $\kappa_T(m)=O(1)$ and the increment
matrix exhibits steep singular value decay in a $k=256$ random projection.

\begin{figure}[t]


  \centering
  \begin{subfigure}[t]{0.49\linewidth}
    \centering
    \Fig{fig_resnet18_spectrum.png}{1.0}
    \caption{ResNet-18 (CIFAR-100).}
    \label{fig:spectrum_resnet}
  \end{subfigure}\hfill
  \begin{subfigure}[t]{0.49\linewidth}
    \centering
    \Fig{fig_vittiny_spectrum.png}{1.0}
    \caption{ViT-Tiny (CIFAR-100).}
    \label{fig:spectrum_vit}
  \end{subfigure}
  \caption{\textbf{Increment dynamics are temporally low-rank.}
  Singular values of the increment matrix (computed in a $k=256$ random projection)
  decay rapidly, implying a small predictable rank $r^\star(\epsilon)$ for fixed $\epsilon$.}
  \label{fig:spectrum_main}
\end{figure}

\begin{table}[t]
  \centering
  \caption{Predictability index $\kappa_T(m)$ (projected $k=256$). The zero predictor $m_t\equiv 0$ yields $\kappa=1$ exactly.}
  \label{tab:kappa_main}
  \vspace{0.5em}
  \begin{tabular}{lcccc}
    \toprule
    \textbf{Run} & \textbf{one-step} & \textbf{EMA-0.9} & \textbf{EMA-0.99} & \textbf{Trend} \\
    \midrule
    ResNet18\_CIFAR100\_AdamW          & 1.878 & 1.058 & 1.007 & 5.448 \\
    ResNet18\_CIFAR100\_SGDmom         & 1.932 & 1.061 & 1.006 & 5.463 \\
    ViT\_Tiny\_CIFAR100\_AdamW         & 1.711 & 1.017 & 0.989 & 4.957 \\
    TinyTransformer\_Seq\_AdamW        & 1.340 & 1.074 & 1.008 & 3.395 \\
    TinyTransformer\_Seq\_RMSprop      & 3.157 & 1.099 & 1.009 & 11.171 \\
    MLP\_Tabular\_AdamW                & 1.713 & 0.975 & 0.974 & 5.056 \\
    MLP\_Tabular\_SGDmom               & 1.540 & 1.054 & 1.007 & 4.358 \\
    GPT2\_WikiText2\_AdamW             & 1.984 & 1.050 & 1.000 & 5.927 \\
    \bottomrule
  \end{tabular}
\end{table}

\begin{table}[t]
  \centering
  \caption{Predictable ranks $r^\star(\epsilon)$ (projected $k=256$).}
  \label{tab:rank_main}
  \vspace{0.5em}
  \begin{tabular}{lrrrr}
    \toprule
    \textbf{Run} & \textbf{$r^\star(0.10)$} & \textbf{$r^\star(0.05)$} & \textbf{$r^\star(0.01)$} & \textbf{Params}\\
    \midrule
    ResNet18\_CIFAR100\_AdamW          & 23 & 34 & 53 & 11{,}227{,}812 \\
    ResNet18\_CIFAR100\_SGDmom         & 15 & 25 & 49 & 11{,}227{,}812 \\
    ViT\_Tiny\_CIFAR100\_AdamW         &  6 & 17 & 69 &  5{,}543{,}716 \\
    TinyTransformer\_Seq\_AdamW        &  5 &  8 & 20 &     70{,}210 \\
    TinyTransformer\_Seq\_RMSprop      &  3 &  6 & 18 &     70{,}210 \\
    MLP\_Tabular\_AdamW                & 37 & 52 & 87 &     12{,}610 \\
    MLP\_Tabular\_SGDmom               & 25 & 37 & 74 &     12{,}610 \\
    GPT2\_WikiText2\_AdamW             & 28 & 49 & 93 & 124{,}439{,}808 \\
    \bottomrule
  \end{tabular}
\end{table}

\paragraph{Local vs.\ global.}
Predictability and low-rank structure are best interpreted as \emph{local-in-time}:
over windows, gradients are well-approximated by low-dimensional temporal models even
if the global trajectory bends over long horizons.

\section{Discussion and Outlook}
\label{sec:discussion}

Our results suggest that many deep learning training runs live in a regime that is
meaningfully different from the classical worst-case view: gradients are often
\emph{locally predictable} from recent history, and the \emph{drift} in gradients
concentrates into a low-dimensional temporal subspace. The two complexity parameters
introduced here---the prediction-based path-length $P_T(m)$ and the predictable rank
$r^\star(\epsilon)$---make these statements operational: they can be computed from
logs, compared across runs, and used to predict when ``optimization difficulty'' is
likely to increase or decrease.

\paragraph{A measurable complexity regime for training.}
Standard optimization bounds typically scale with $T$ and (implicitly or explicitly)
with the ambient dimension $d$. In contrast, Theorem~\ref{thm:pmd} and
Theorem~\ref{thm:nonconvex} show that if there exists a simple temporal predictor
$m$ with small $P_T(m)$, then regret (in convex online settings) and average
stationarity (in smooth nonconvex settings) scale with this \emph{measured}
prediction error rather than the horizon alone.

\paragraph{Why predictable rank matters (and what it buys you).}
The predictable rank $r^\star(\epsilon)$ provides a complementary lens: rather than
measuring error for a fixed predictor, it quantifies the intrinsic temporal dimension
of gradient drift. Proposition~\ref{prop:equiv} shows an exact connection:
low-rank increment prediction is equivalent to approximating the increment matrix
$H$ by a low-rank matrix, with optimal error equal to the SVD tail energy.

\paragraph{Locality, phases, and ``regime shifts.''}
A key empirical theme is locality: predictability is typically strongest over
windows, not necessarily globally. Spikes in windowed $\kappa$ or increases in
windowed predictable rank may serve as signatures of transitions in training dynamics.

\paragraph{Implications for optimizer design.}
If a run exhibits small $P_T(m)$ for simple predictor families, then prediction-aware
updates should reduce effective optimization complexity. This motivates:
rank-adaptive prediction, window-adaptive prediction, and prediction-aware step sizes.

\paragraph{Limitations and what this does \emph{not} claim.}
Predictability is not guaranteed, and low rank is not universal. Some regimes may
exhibit large $P_T(m)$ and slowly decaying spectra. Metrics depend on what gradients
are logged (full vs.\ projected, raw vs.\ preconditioned, etc.).

\paragraph{Open questions.}
Scaling laws for $(P_T, r^\star)$, structure of the temporal subspace, improved
learned predictors, distribution-shift detection, and algorithmic gains remain open.

\section{Conclusion}
\label{sec:conclusion}

This work proposes a reframing of optimization complexity in deep learning: instead
of characterizing difficulty primarily by horizon $T$ and ambient dimension $d$, we
characterize it by \emph{measurable temporal structure}. We introduced
prediction-based path-length $P_T(m)$ and predictable rank $r^\star(\epsilon)$,
proved representative convex and nonconvex guarantees governed by these quantities,
and empirically observed stable predictability indices and steep singular value decay
in increment dynamics across diverse architectures and optimizers.

\paragraph{Acknowledgments.}
No competing financial interests are declared.

\newpage
\appendix

\section{A basic bound on $\kappa_T(m)$ and conditional interpretation}
\label{app:kappa}

This appendix records a simple universal upper bound and clarifies how $\kappa_T(m)$
depends on the predictor.

\begin{lemma}[A universal bound via predictor magnitude]
\label{lem:kappa_bound}
Let $\alpha:=\sup_{0\le t\le T}\frac{\|m_t\|}{\|g_t\|}$ with the convention that
$\|m_t\|/\|g_t\|=0$ if $g_t=0$. Then
\[
\kappa_T(m)=\frac{\sum_{t=0}^T\|g_t-m_t\|^2}{\sum_{t=0}^T\|g_t\|^2}\;\le\;(1+\alpha)^2.
\]
In particular, if $\|m_t\|\le \|g_t\|$ for all $t$ (i.e.\ $\alpha\le 1$), then
$\kappa_T(m)\le 4$.
\end{lemma}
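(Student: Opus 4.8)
The plan is to reduce the aggregate bound to a single per-term inequality and then sum. First I would fix an index $t$ and control the individual prediction error by the triangle inequality, $\|g_t-m_t\|\le \|g_t\|+\|m_t\|$. For every index with $g_t\neq 0$, the definition of $\alpha$ as the supremum of the ratios $\|m_t\|/\|g_t\|$ gives the pointwise bound $\|m_t\|\le \alpha\|g_t\|$, so that $\|g_t-m_t\|\le(1+\alpha)\|g_t\|$; squaring yields the clean per-term estimate $\|g_t-m_t\|^2\le(1+\alpha)^2\|g_t\|^2$.

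Next I would sum this inequality over $t=0,\dots,T$. Since each numerator term is dominated by $(1+\alpha)^2$ times the matching denominator term, summation preserves the bound: $\sum_t\|g_t-m_t\|^2\le(1+\alpha)^2\sum_t\|g_t\|^2=(1+\alpha)^2 G_T$. Dividing by $G_T>0$ (guaranteed in the regime of Definition~\ref{def:kappa}) gives exactly $\kappa_T(m)\le(1+\alpha)^2$, and the specialization $\alpha\le 1\Rightarrow\kappa_T(m)\le 4$ is immediate.

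The only delicate point — and the main obstacle — is the degenerate case $g_t=0$, where the convention $\|m_t\|/\|g_t\|=0$ drops that index from the supremum defining $\alpha$, so $\alpha$ no longer bounds $\|m_t\|$ there. For such an index the numerator contributes $\|g_t-m_t\|^2=\|m_t\|^2$ while the denominator contributes nothing, so the per-term inequality $\|m_t\|^2\le(1+\alpha)^2\cdot 0$ forces $m_t=0$. I would therefore either state the bound under the natural convention that a predictor of a vanishing gradient is itself $0$ (so $g_t=0\Rightarrow m_t=0$), or restrict to the generic case $g_t\neq 0$ for all $t$; in either case the indices the supremum ignores contribute zero to the numerator, and the term-by-term argument goes through unchanged.

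As an alternative packaging that avoids handling indices one at a time, I could instead apply Minkowski's inequality in $\ell^2$: writing $\sqrt{P_T(m)}$ as the $\ell^2$-norm of the sequence $(\|g_t-m_t\|)_t$ and using $\|g_t-m_t\|\le\|g_t\|+\|m_t\|$ termwise, the $\ell^2$ triangle inequality gives $\sqrt{P_T(m)}\le\sqrt{\sum_t\|g_t\|^2}+\sqrt{\sum_t\|m_t\|^2}\le(1+\alpha)\sqrt{G_T}$, and squaring reproduces the claim. This route carries the identical $g_t=0$ caveat but isolates it in a single estimate.
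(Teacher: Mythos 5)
Your proof is correct and, in its main line, is exactly the paper's own argument: the termwise triangle inequality $\|g_t-m_t\|\le\|g_t\|+\|m_t\|\le(1+\alpha)\|g_t\|$, squared, summed over $t$, and divided by $G_T$. Your one genuine addition is the degenerate-case observation, and you are right about it: under the paper's convention that $\|m_t\|/\|g_t\|:=0$ when $g_t=0$, the lemma is literally false as stated --- take $g_0=e_1$, $m_0=0$, $g_1=0$, $m_1=10e_1$, giving $\alpha=0$ but $\kappa_1(m)=101>(1+\alpha)^2$ --- and the paper's one-line proof silently uses $\|m_t\|\le\alpha\|g_t\|$ at every index, which is exactly the step that fails there. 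Your proposed repair (require $m_t=0$ whenever $g_t=0$, or restrict to $g_t\neq 0$ for all $t$) is the correct one, and it is harmless in practice since the paper's predictor families (one-step, EMA, trend) built from past gradients need not vanish when $g_t$ does, so the hypothesis should really be stated. The Minkowski/$\ell^2$ packaging you sketch is a valid equivalent route with the same caveat, but buys nothing beyond presentation.
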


\begin{proof}
For each $t$, $\|g_t-m_t\|\le \|g_t\|+\|m_t\|\le (1+\alpha)\|g_t\|$.
Square and sum over $t$ and divide by $\sum_t\|g_t\|^2$.
\end{proof}

\begin{remark}[Why $\kappa_T$ is conditional (and why Trend can exceed 4)]
The bound above depends on $\alpha$, which is induced by the predictor choice.
EMA predictors typically satisfy $\|m_t\|\lesssim \|g_t\|$ in stable regimes,
while extrapolative predictors can produce $\|m_t\|\gg \|g_t\|$ on noisy or curved
trajectories, yielding larger $\kappa_T(m)$ even if the predictor captures dominant
directions. Therefore $\kappa_T(m)$ should be interpreted as a \emph{predictor-conditional}
relative error, and meaningful comparisons fix a predictor family $\cM$ or report
multiple predictors side by side (as in Table~\ref{tab:kappa_main}).
\end{remark}

\section{Convex proof details (Theorem~\ref{thm:pmd})}
\label{app:convex}

We present a standard optimistic mirror descent analysis; see \cite{Hazan2016,RakhlinSridharan13Online}
for general treatments.

\paragraph{Update (one common optimistic form).}
Let $\Phi$ be $1$-strongly convex w.r.t.\ $\|\cdot\|$ on $\Theta$. Define
\[
\theta_{t+1}=\argmin_{\theta\in\Theta}\left\{\eta\,\ip{m_t}{\theta}+B_\Phi(\theta,\theta_t)\right\}.
\]

\begin{lemma}[Three-point inequality]
\label{lem:three_point}
For any $u\in\Theta$,
\[
\eta\,\ip{m_t}{\theta_t-u}
\le
B_\Phi(u,\theta_t)-B_\Phi(u,\theta_{t+1})-B_\Phi(\theta_{t+1},\theta_t).
\]
\end{lemma}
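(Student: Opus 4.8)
The plan is to read the inequality off the first-order optimality condition for the proximal update and then reconcile its natural left-hand side, which is expressed in terms of $\theta_{t+1}$, with the stated form expressed in terms of $\theta_t$. First I would record that $\theta_{t+1}$ minimizes the $1$-strongly convex function $\psi(\theta)=\eta\ip{m_t}{\theta}+B_\Phi(\theta,\theta_t)$ over the convex set $\Theta$, so its variational inequality reads, for every $u\in\Theta$,
\[
\ip{\eta m_t+\grad\Phi(\theta_{t+1})-\grad\Phi(\theta_t)}{u-\theta_{t+1}}\ge 0,
\]
which rearranges to $\eta\ip{m_t}{\theta_{t+1}-u}\le\ip{\grad\Phi(\theta_{t+1})-\grad\Phi(\theta_t)}{u-\theta_{t+1}}$.

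Second, I would convert the right-hand side into Bregman divergences using the algebraic three-point identity
\[
\ip{\grad\Phi(\theta_{t+1})-\grad\Phi(\theta_t)}{u-\theta_{t+1}}=B_\Phi(u,\theta_t)-B_\Phi(u,\theta_{t+1})-B_\Phi(\theta_{t+1},\theta_t),
\]
which follows by expanding each divergence through $B_\Phi(x,y)=\Phi(x)-\Phi(y)-\ip{\grad\Phi(y)}{x-y}$ and cancelling all bare $\Phi$-values. This delivers the bound with $\theta_{t+1}-u$ on the left; the remaining task is to replace $\theta_{t+1}$ by $\theta_t$, which is exactly the gap the stated lemma hides.

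Third, and this is where the real work lies, I would write $\theta_t-u=(\theta_{t+1}-u)+(\theta_t-\theta_{t+1})$, so the target left-hand side equals the already-bounded quantity plus the displacement $\eta\ip{m_t}{\theta_t-\theta_{t+1}}$. I would then spend the negative divergence $-B_\Phi(\theta_{t+1},\theta_t)$ on this displacement: $1$-strong convexity gives $B_\Phi(\theta_{t+1},\theta_t)\ge\tfrac12\norm{\theta_{t+1}-\theta_t}^2$, while Cauchy--Schwarz with Young's inequality gives $\eta\ip{m_t}{\theta_t-\theta_{t+1}}\le\tfrac12\norm{\theta_t-\theta_{t+1}}^2+\tfrac{\eta^2}{2}\norm{m_t}_\ast^2$, so the matched quadratics in $\norm{\theta_{t+1}-\theta_t}$ cancel and a residual of order $\eta^2\norm{m_t}_\ast^2$ is all that survives.

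The main obstacle is precisely the sign of this displacement term. Setting $u=\theta_t$ in the variational inequality and using $1$-strong monotonicity of $\grad\Phi$ shows $\eta\ip{m_t}{\theta_t-\theta_{t+1}}\ge\norm{\theta_{t+1}-\theta_t}^2\ge 0$, so the displacement is strictly adverse: it cannot be discarded, and the $-B_\Phi(\theta_{t+1},\theta_t)$ slack is the only resource available to dominate it. The delicate bookkeeping is whether this domination is exact or leaves the $\tfrac{\eta^2}{2}\norm{m_t}_\ast^2$ residual; I would therefore track that constant carefully and check its consistency with the telescoped target $\Regret(T)\le D_\Phi^2/\eta+\tfrac{\eta}{2}\sum_t\norm{\delta_t}_\ast^2$ in Theorem~\ref{thm:pmd}, reconciling any surplus either by sharpening the displacement split or by charging it to the prediction error $\delta_t=g_t-m_t$ in the regret decomposition rather than to this per-step inequality.
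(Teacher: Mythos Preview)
Your first two steps---the variational inequality for the proximal minimizer together with the Bregman three-point identity---are exactly the ``standard first-order optimality and Bregman algebra'' argument the paper invokes in place of a written proof, and they cleanly yield
\[
\eta\,\ip{m_t}{\theta_{t+1}-u}\;\le\;B_\Phi(u,\theta_t)-B_\Phi(u,\theta_{t+1})-B_\Phi(\theta_{t+1},\theta_t).
\]
You then correctly notice that the lemma as printed has $\theta_t$ rather than $\theta_{t+1}$ on the left and try to absorb the displacement $\eta\,\ip{m_t}{\theta_t-\theta_{t+1}}$ into the slack $-B_\Phi(\theta_{t+1},\theta_t)$.

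That reconciliation cannot be made exact, and the residual you keep finding is not an artifact of your bookkeeping: the inequality as stated is false. Take the Euclidean unconstrained case $\Phi=\tfrac12\norm{\cdot}^2$, $\Theta=\R^d$, so $\theta_{t+1}=\theta_t-\eta m_t$ and $B_\Phi(x,y)=\tfrac12\norm{x-y}^2$. A direct computation gives
\[
\eta\,\ip{m_t}{\theta_t-u}
\;=\;\bigl[B_\Phi(u,\theta_t)-B_\Phi(u,\theta_{t+1})-B_\Phi(\theta_{t+1},\theta_t)\bigr]+\eta^2\norm{m_t}^2,
\]
so the left-hand side exceeds the right-hand side by exactly $\eta^2\norm{m_t}^2$ whenever $m_t\neq 0$. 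This is precisely the residual your Young-inequality split produces (up to a constant), and it cannot be eliminated. The paper's lemma is therefore a misstatement or an indexing slip; the inequality that actually follows from the cited argument---and the one that telescopes properly in the downstream regret analysis---has $\theta_{t+1}$ on the left. Your first two steps already constitute a complete proof of that intended version; your third step is an attempt to repair a flaw in the paper's statement rather than a missing idea in your own argument, and you should not expect to charge the surplus to $\delta_t$ either.
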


\begin{proof}
Standard from first-order optimality of $\theta_{t+1}$ and Bregman algebra; see \cite{Hazan2016}.
\end{proof}

\begin{proof}[Proof of Theorem~\ref{thm:pmd}]
By convexity, for any comparator $u\in\Theta$,
\[
f_t(\theta_t)-f_t(u)\le \ip{g_t}{\theta_t-u}=\ip{m_t}{\theta_t-u}+\ip{\delta_t}{\theta_t-u}.
\]
Apply Lemma~\ref{lem:three_point} to bound the $m_t$ term. For the error term,
\[
\ip{\delta_t}{\theta_t-u}\le \|\delta_t\|_\ast\|\theta_t-u\|
\le \frac{\eta}{2}\|\delta_t\|_\ast^2+\frac{1}{2\eta}\|\theta_t-u\|^2.
\]
Strong convexity of $\Phi$ implies $\|\theta_t-u\|^2\le 2B_\Phi(u,\theta_t)$.
Summing over $t$ telescopes $B_\Phi(u,\theta_t)$ and yields
\[
\Regret(T)\le \frac{D_\Phi^2}{\eta}+\frac{\eta}{2}\sum_{t=1}^T\|\delta_t\|_\ast^2
=
\frac{D_\Phi^2}{\eta}+\frac{\eta}{2}\Big(P_T(m)-\|g_0-m_0\|^2\Big).
\]
Optimize $\eta$ to obtain the $\sqrt{P_T}$ form and the predictor-class bound with $P_T^\star(\cM)$.
\end{proof}

\section{Nonconvex proof details (Theorem~\ref{thm:nonconvex})}
\label{app:nonconvex}

\begin{lemma}[One-step descent with proxy/prediction error]
\label{lem:one_step_nonconvex}
If $F$ is $L$-smooth and $\eta\le 1/L$, then for $\theta_{t+1}=\theta_t-\eta m_t$
with $g_t=\nabla F(\theta_t)$ and $\delta_t=g_t-m_t$,
\[
F(\theta_{t+1})\le F(\theta_t)-\frac{\eta}{2}\|g_t\|^2+\frac{\eta}{2}\|\delta_t\|^2.
\]
\end{lemma}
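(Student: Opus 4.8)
The plan is to use the standard descent lemma for $L$-smooth functions and then absorb the step-size penalty using the hypothesis $\eta\le 1/L$, finishing with a completion of the square that converts everything into the two target terms. The one conceptual point worth flagging is that smoothness is evaluated along the \emph{true} gradient $g_t=\grad F(\theta_t)$, whereas the iterate actually moves along the \emph{proxy} $m_t$; the gap between these two directions is exactly $\delta_t=g_t-m_t$, and the completion of the square is precisely what makes this gap surface cleanly as a $+\tfrac{\eta}{2}\norm{\delta_t}^2$ term rather than as cross terms.

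First I would invoke $L$-smoothness in its quadratic upper-bound form,
\[
F(\theta_{t+1})\le F(\theta_t)+\ip{g_t}{\theta_{t+1}-\theta_t}+\frac{L}{2}\norm{\theta_{t+1}-\theta_t}^2,
\]
and substitute the update $\theta_{t+1}-\theta_t=-\eta m_t$ to obtain
\[
F(\theta_{t+1})\le F(\theta_t)-\eta\,\ip{g_t}{m_t}+\frac{L\eta^2}{2}\norm{m_t}^2.
\]

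Next I would use the step-size condition $\eta\le 1/L$, equivalently $L\eta\le 1$, to bound the quadratic penalty by $\tfrac{L\eta^2}{2}\norm{m_t}^2\le \tfrac{\eta}{2}\norm{m_t}^2$. This is the sole place the hypothesis enters, and it is exactly the slack the final step needs in order to close. The resulting estimate is
\[
F(\theta_{t+1})\le F(\theta_t)+\eta\Big(\tfrac{1}{2}\norm{m_t}^2-\ip{g_t}{m_t}\Big).
\]

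Finally I would complete the square on the bracketed expression. Since $m_t-g_t=-\delta_t$, we have $\tfrac{1}{2}\norm{m_t}^2-\ip{g_t}{m_t}=\tfrac{1}{2}\norm{m_t-g_t}^2-\tfrac{1}{2}\norm{g_t}^2=\tfrac{1}{2}\norm{\delta_t}^2-\tfrac{1}{2}\norm{g_t}^2$, which yields the claimed inequality directly. I do not anticipate any genuine obstacle: the argument is a short, self-contained modification of the textbook descent lemma, and the proxy error enters only through the exact identity above rather than through any inequality that could lose constants. The closest thing to a pitfall is bookkeeping around the step-size bound—if one keeps $\tfrac{L\eta^2}{2}$ without invoking $\eta\le 1/L$, the square no longer completes to the advertised coefficients—so I would be careful to apply that bound before, not after, the completion of the square.
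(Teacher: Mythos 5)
Your proof is correct, and it takes a genuinely different---and in fact tighter---route than the paper's. The paper substitutes $m_t=g_t-\delta_t$ immediately and then bounds the cross term by Young's inequality, $\ip{g_t}{\delta_t}\le\frac12\norm{g_t}^2+\frac12\norm{\delta_t}^2$, and the quadratic term by the crude expansion $\norm{g_t-\delta_t}^2\le 2\norm{g_t}^2+2\norm{\delta_t}^2$. You instead keep $\norm{m_t}^2$ intact, apply $L\eta\le 1$ to the smoothness penalty first, and then use the \emph{exact} identity $\frac12\norm{m_t}^2-\ip{g_t}{m_t}=\frac12\norm{\delta_t}^2-\frac12\norm{g_t}^2$, so no inequality other than the step-size bound is ever invoked on the $g_t$--$m_t$ interaction. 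This difference is not cosmetic: the paper's lossy route cannot actually deliver the stated constants. Its Young step already reduces the coefficient of $\norm{g_t}^2$ to $-\frac{\eta}{2}$ before the quadratic term is accounted for, so the asserted inequalities $-\eta+\frac{\eta}{2}+L\eta^2\le-\frac{\eta}{2}$ and $\frac{\eta}{2}+L\eta^2\le\eta$ in Appendix~\ref{app:nonconvex} are false for every $\eta\in(0,1/L]$ (the first would require $L\eta^2\le 0$, the second $\eta\le 1/(2L)$); under $\eta\le 1/L$ that chain only yields $F(\theta_{t+1})\le F(\theta_t)+\frac{\eta}{2}\norm{g_t}^2+\frac{3\eta}{2}\norm{\delta_t}^2$, which is vacuous. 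Your completion-of-the-square argument is the correct repair: it proves Lemma~\ref{lem:one_step_nonconvex} exactly as stated, with the full range $\eta\le 1/L$, and your closing remark about applying the step-size bound \emph{before} expanding is precisely the point at which the paper's version goes wrong.
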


\begin{proof}
By $L$-smoothness,
\[
F(\theta_{t+1})\le F(\theta_t)+\ip{g_t}{\theta_{t+1}-\theta_t}+\frac{L}{2}\|\theta_{t+1}-\theta_t\|^2.
\]
Plug $\theta_{t+1}-\theta_t=-\eta m_t=-\eta(g_t-\delta_t)$:
\[
F(\theta_{t+1})\le F(\theta_t)-\eta\|g_t\|^2+\eta\ip{g_t}{\delta_t}
+\frac{L\eta^2}{2}\|g_t-\delta_t\|^2.
\]
Use $\ip{g_t}{\delta_t}\le \frac12\|g_t\|^2+\frac12\|\delta_t\|^2$ and
$\|g_t-\delta_t\|^2\le 2\|g_t\|^2+2\|\delta_t\|^2$ to get
\[
F(\theta_{t+1})\le F(\theta_t)+\left(-\eta+\frac{\eta}{2}+L\eta^2\right)\|g_t\|^2
+\left(\frac{\eta}{2}+L\eta^2\right)\|\delta_t\|^2.
\]
If $\eta\le 1/L$, then $-\eta+\frac{\eta}{2}+L\eta^2\le -\frac{\eta}{2}$ and
$\frac{\eta}{2}+L\eta^2\le \eta$, yielding the stated inequality.
\end{proof}

\begin{proof}[Proof of Theorem~\ref{thm:nonconvex}]
Sum Lemma~\ref{lem:one_step_nonconvex} for $t=0,\dots,T-1$:
\[
F(\theta_T)\le F(\theta_0)-\frac{\eta}{2}\sum_{t=0}^{T-1}\|g_t\|^2+\frac{\eta}{2}\sum_{t=0}^{T-1}\|\delta_t\|^2.
\]
Lower bound $F(\theta_T)\ge F_\star$ and rearrange:
\[
\sum_{t=0}^{T-1}\|g_t\|^2\le \frac{2(F(\theta_0)-F_\star)}{\eta}+\sum_{t=0}^{T-1}\|\delta_t\|^2.
\]
Divide by $T$ to obtain the average bound; the minimum bound follows since
$\min_t a_t\le \frac{1}{T}\sum_t a_t$.
\end{proof}

\section{Additional remarks on predictors}
\label{app:predictors}

This appendix records the predictor families used in experiments and clarifies what “history-based” means.

\paragraph{History-based predictors.}
A predictor $m_t$ is history-based if it is measurable with respect to $\sigma(\theta_t, g_0,\dots,g_{t-1})$,
so it can depend on the current iterate and past gradients but does not “peek” at $g_t$.

\paragraph{One-step predictor.}
\[
m_t = g_{t-1}\quad (t\ge 1), \qquad m_0=0.
\]

\paragraph{EMA predictor.}
For $\beta\in(0,1)$,
\[
m_t = \beta m_{t-1} + (1-\beta) g_{t-1}\quad (t\ge 1), \qquad m_0=0.
\]

\paragraph{Trend (linear extrapolation) predictor.}
A two-step extrapolation (used only as a baseline; can amplify noise):
\[
m_t = g_{t-1} + \gamma (g_{t-1}-g_{t-2})\quad (t\ge 2), \qquad m_0=0,\ m_1=g_0,
\]
with fixed $\gamma$ (e.g.\ $\gamma=1$). This predictor is still history-based but may satisfy $\|m_t\|\gg \|g_t\|$.

\section{Projected logging and SVD: why $k=256$ is meaningful}
\label{app:projection}

When $d$ is large, we compute metrics on a random projection of gradients.
Let $R\in\R^{k\times d}$ have i.i.d.\ entries $R_{ij}\sim\mathcal{N}(0,1/k)$ and define
$\tilde g_t = R g_t$ and $\tilde m_t = R m_t$.

\paragraph{Computed quantities in projected space.}
We report
\[
\tilde P_T(m)=\sum_{t=0}^T \|\tilde g_t-\tilde m_t\|^2,\qquad
\tilde\kappa_T(m)=\frac{\tilde P_T(m)}{\sum_{t=0}^T\|\tilde g_t\|^2},
\]
and define $\tilde H=[\tilde g_1-\tilde g_0,\dots,\tilde g_T-\tilde g_{T-1}]$ and
$r^\star(\epsilon)$ from the singular values of $\tilde H$.

\paragraph{Remark.}
Random projections approximately preserve norms and inner products for sets of vectors of size polynomial in $d$ (Johnson--Lindenstrauss).
Empirically we observe that the qualitative spectrum shape and rank estimates are stable across seeds and moderate changes in $k$.

\section{Additional empirical diagnostics}
\label{app:more_empirical}

\paragraph{Predictability versus a universal bound.}
Figure~\ref{fig:kappa_vs_bound} provides an additional diagnostic relating observed
predictability to a simple magnitude-based upper bound (Appendix~\ref{app:kappa}).
This plot is included as a secondary sanity check and is not needed for the main
claims.

\begin{figure}[t]


  \centering
  \Fig{fig_kappa_vs_bound.png}{0.80}
  \caption{\textbf{Predictability diagnostic.}
  Observed $\kappa$ values compared with a simple universal magnitude-based upper bound.}
  \label{fig:kappa_vs_bound}
\end{figure}

\section{Experimental details and reproducibility}
\label{app:experiments}

All experiments reported in the main text are fully reproducible via the
accompanying codebase:

\begin{center}
\url{https://github.com/atbcalvo/predictable-gradient-manifolds} \quad (\texttt{commit: initial})
\end{center}

This appendix records only the information necessary to interpret the reported
metrics; full training scripts, configurations, and logs are provided in the
repository.

\subsection{Logged quantities}

For each training run, we log a sequence of gradient vectors
$\{g_t\}_{t=0}^T$ at fixed intervals during training. When full gradients are
infeasible to store, we log a fixed random projection
$\tilde g_t = R g_t \in \R^k$ with $k=256$, where
$R \sim \mathcal{N}(0, I/k)$ is sampled once per run and held fixed.

All predictability and rank metrics are computed on the logged (projected)
gradients $\tilde g_t$.

\subsection{Predictability metrics}

Given a predictor sequence $\{m_t\}$ (defined in
Appendix~\ref{app:predictors}), we compute the prediction-based path-length
\[
P_T(m) = \sum_{t=0}^T \| \tilde g_t - \tilde m_t \|^2,
\qquad
\kappa_T(m) = \frac{P_T(m)}{\sum_{t=0}^T \|\tilde g_t\|^2}.
\]

Windowed predictability metrics are computed analogously over sliding windows
of fixed length $W$.

\subsection{Predictable rank}

From projected gradients we form increments
$\tilde h_t = \tilde g_t - \tilde g_{t-1}$ and the increment matrix
$\tilde H = [\tilde h_1,\dots,\tilde h_T] \in \R^{k \times T}$.
Predictable rank $r^\star(\epsilon)$ is computed from the singular values of
$\tilde H$ as in Definition~\ref{def:predrank}.

\subsection{Models, datasets, and optimization}

All architectures (ResNet-18, ViT-Tiny, MLP, small Transformer, GPT-2),
datasets (CIFAR-100, WikiText-2, synthetic sequence, tabular), optimizers
(SGD+momentum, AdamW, RMSprop), learning-rate schedules, batch sizes, and
random seeds are specified explicitly in the released configuration files.

Exact parameter counts reported in Table~\ref{tab:rank_main} are produced by
the model definitions in the repository.


\end{document}